\definecolor{shadecolor}{gray}{0.9}
\crefname{section}{§}{§§}
\Crefname{section}{§}{§§}
\def\E{\mathrm{E}}
\def\hE{\hat{\E}}
\def\thetavec{{\mathbb{\theta}}}
\def\eps{\epsilon}
\def\reals{\mathbb{R}}
\def\cS{\mathcal{S}}
\def\EOS{\mathrm{EOS}}
\newtheorem{theorem}{Theorem}
\newtheorem{lemma}{Lemma}
\newtheorem{observation}{Observation}
\DeclarePairedDelimiterX{\infdivx}[2]{(}{)}{%
  #1\;\delimsize\|\;#2%
}
\newcommand{\kld}{D_{\text{KL}}\infdivx}
\title{Why GANs are overkill for NLP}
\author{%
  David Alvarez-Melis\thanks{equal contribution, alphabetic ordering}  \\
  Microsoft Research\\
  \texttt{alvarez.melis@microsoft.com} \\
  \And
  Vikas Garg$^*$  \\
  Aalto University; FCAI; YaiYai Ltd\\
  \texttt{vikas.garg@aalto.fi; vikas@yaiyai.fi} \\  
  \And 
  Adam Tauman Kalai$^*$\\
  Microsoft Research \\
  \texttt{adam@kal.ai}
}
\begin{document}

\maketitle

\begin{abstract}%
This work offers a novel theoretical perspective on why, despite numerous attempts, adversarial approaches to generative modeling (e.g., GANs) have not been as popular for certain generation tasks, particularly sequential tasks such as Natural Language Generation, as they have in others, such as Computer Vision. In particular, on sequential data such as text, maximum-likelihood approaches are significantly more utilized than GANs. We show that, while it may seem that maximizing likelihood is inherently different than minimizing distinguishability, this distinction is largely artificial and only holds for limited models. We argue that minimizing KL-divergence (i.e., maximizing likelihood) is a more efficient approach to effectively minimizing the same distinguishability criteria that adversarial models seek to optimize. Reductions show that minimizing distinguishability can be seen as simply boosting likelihood for certain families of models including n-gram models and neural networks with a softmax output layer. To achieve a full polynomial-time reduction, a novel next-token distinguishability model is considered.
\end{abstract}


\section{Introduction}
\vspace{-0.2cm}
Consider a situation where one has samples from a true distribution $p$ over a set $X$ and one wishes to learn to generate similar samples, such as learning to generate English sentences from a large English text corpus. One seeks an approximation $q$ of $p$ which is ``close'' in some sense and from which samples can efficiently be generated.
A common approach to fit these models is Maximum Likelihood Estimation (MLE), which given a training set from $p$ and a parametrized distribution $q_{\theta}$ seeks parameters $\theta$ that maximize the likelihood $q_{\theta}$ assigns to a training set. MLE has long been one of the most popular methods for fitting generative models of sequential data, such as language, where autoregressive neural language models generate remarkably realistic text, e.g., GPT-3 \citep{brown2020language} and PaLM \citep{PaLM2022}. MLE generally involves computing likelihoods $q_\theta(x)$ which can be more challenging in some domains than others, e.g., it may be
more difficult to estimate the probability of a (high-dimensional, real-valued) image than a (discrete-valued) sentence.

An alternative approach, Generative Adversarial Networks (GANs), has become popular across several domains, particularly Computer Vision, owing to breakthrough realism in the images they output \citep[e.g.,][]{G2014, ZGMO2019}. GANs employ an adversarial approach to generation through a zero-sum game between a generator and a distinguisher in which the generator produces samples $x \in X$ which the distinguisher tries to distinguish from real samples from $p$. Often, both the generator and the distinguisher are differentiable neural networks, though this min-max approach of choosing a model whose outputs are nearly indistinguishable from real examples might be considered for any families of generative models and distinguishers. A major advantage of GANs (particularly for images) is that they can be used for generation without requiring computing likelihoods. This advantage is not significant for many sequential models such as language models, where computing likelihoods is not difficult.

In contrast, the adversarial approach has yet to demonstrate significant improvements in some other domains such as Natural Language Processing (NLP). One  well-known barrier to NLP GANs is that language models produce discrete outputs (words), so they are not naturally differentiable \citep{goodfellow2017nips}. However, despite numerous works circumventing this limitation and adapting GANs to text generation \citep{yu2017seqgan, press2017language, guo2018long, dautume2019training}, adversarial-based models have yet to achieve the same popularity or performance gains that were seen for images. In particular, language GANs have been shown to under-perform MLE in terms of quality \citep{tevet2019evaluating} while facing the challenge of lack of diversity due to mode collapse \citep{caccia2018language}, which is a well-known issue with GANs in other domains.

\subsection{Likelihood and Distinguishability: Two sides of the same coin?}

In this work, we suggest a different, fundamental barrier to adopting GANs in domains where MLE is prevalent: 
the adversarial approach of minimizing distinguishability is effectively a roundabout method of maximizing likelihood on observed data, and hence employing MLE directly can be more efficient. This is the case in NLP where, unlike computer vision, a measure of likelihood called \textit{perplexity} has been the prevailing metric for training and evaluating language models for decades. We show how GANs boost likelihood in the spirit of, and inspired by, the related celebrated work of \citet{FHT2000} that showed how boosting can be viewed as an iterative approach for logistic regression.  

Consider a large finite set or countably infinite set $X$ and a family $Q$ of probability distributions over $X$. For language, these might be n-gram models or neural models. Also consider a family $F$ of distinguishers $f: X \rightarrow [0,1]$ that aim to distinguish random examples drawn from a distribution $p$ from those sampled from $q$.  For any such classifier $f$, we call the difference $\alpha(f) = \E_q[f(x)]-\E_p[f(x)]$ the distinguishability \textit{advantage} of $f$ because it quantifies the accuracy of $f$ at the task of identifying ``fake'' examples. A perfect distinguisher would thus have $\alpha(f)=1$, while $f(x)=1/2$ which predicts at random has $\alpha(f)=0$. More formally, imagine picking $y \in \{0,1\}$ uniformly at random and picking a random example $x$ from $q$ if $y=1$ and from $p$ if $y=0$. The (randomized) binary classifier that predicts, for any $x$, $\hat{y}=1$ with probability $f(x)$, has (expected) accuracy: 
\[
\frac{1}{2} \sum_x q(x) f(x) + \frac{1}{2} \sum_x p(x) (1-f(x)) = \frac{1}{2} + \frac{1}{2}\alpha(f).\]
Given a family $F$, we define the \textit{distinguishability} of $q$ from $p$ to be $d(q) = \max_{f\in F} \alpha(f)$. Distinguishability is known to be a lower-bound on \textit{total variation distance} (also called statistical distance),  
a measure of distance between distributions that is difficult to directly estimate for large domains $X$ \citep{sriperumbudur2012empirical}. The ``Bayes-optimal'' distinguisher simply predicts 1 iff $q(x)>p(x)$, and has advantage equal to the total variation distance \citep[see, e.g.,][]{hashimoto2019unifying}. Clearly $d(p)=0$, i.e., $p$ is indistinguishable from itself. Motivated by this observation, numerous \textit{adversarial} approaches to approximating $q$ have been attempted to minimize distinguishability $d(q)$. If $p \in Q$ then $d(q)$ is minimized at $q=p$. 

\paragraph{Example where maximizing likelihood $\neq$ minimizing distinguishability.}
When $p \not\in Q$, minimizing distinguishability among $q \in Q$ may be different than maximizing the likelihood of $q$. For instance, consider modeling the age in years of humans (say the entire population on earth) as a uniform distribution $q_m$ over $x \in \{0, 1, 2, \ldots, m\}$. Now, the $m$ which maximizes likelihood would be the age of the oldest person, which is $m=119$ at the time of this article---any smaller $m$ would assign zero probability to the 119-year-old and thus to the entire population. However, this distribution is very distinguishable from the true distribution---for instance it assigns probability $\sim 17\%$ to being over 100 years, which is extremely unlikely among today's population. A smaller $m < 100$ would yield less distinguishable samples. While it may seem therefore that distinguishability and likelihood are inherently different criteria, as we shall see this is an artificial limitation due to the weakness of family $Q$.

Of course, the (in)equivalence depends on the families $F$ of distinguishers and $Q$ of probability distributions. We give two results showing that maximizing likelihood and minimizing distinguishability are equivalent as long as $F$ and $Q$ are similar in power, even when $p \not\in Q$. First, we consider families $Q$ that are ``log-linear''  over some set $F$ of functions, which include n-gram models and neural networks whose top layer is a softmax, among others. The equivalence in this case is particularly simple and serves to illustrate how MLE can be a simpler way to reach the same optimum. In this case, $Q$ and $F$ are naturally paired.

\paragraph{Maximizing likelihood $=$ minimizing distinguishability for log-linear $Q$.}
In the above age example, the family $Q$ of geometric distributions $q_\theta(n) \propto \exp(-\theta n)$ for $\theta > 0$ is an example of a log-linear family. We show that if $q$ can be distinguished from the population distribution $p$ by a function $f\in F$, then folding $f$ into $q$ yields a new model in $Q$ with greater likelihood. In practice, one only has a sample of the true distribution $p$ (not the entire population) and maximizing log-likelihood is approximation of minimizing KL divergence $\kld{p}{q} = \sum_{x}p(x)\log \frac{p(x)}{q(x)}$. We give formal statements about minimizing KL-divergence as well.

The conclusion of this first observation is that if a GAN were to converge within a log-linear family (and making GANs converge is often not an easy feat in practice), it would converge to the MLE. 

\paragraph{General polynomial-time reduction.}
 Our second result is a polynomial-time reduction from likelihood maximization to next-token distinguishability, without the log-linear requirement. We consider the common case of (unidirectional) sequential models that predict the next token based on the previous tokens, which have several practical advantages including being efficient to compute---the probability of a sequence is simply the product of the conditional probabilities of each subsequent word given the previous words. Many state-of-the-art transformer language models such as GPT-3 take this form. Achieving an efficient reduction is challenging due to the normalization requirement of computing partition functions. In order to achieve a polynomial-time reduction, we consider a notion of next-token distinguishability, where the game is as follows: a prefix of tokens is chosen, based on which the generator generates a token to follow the prefix. Given the actual next token and the generated next token, the distinguisher aims is to identify which is which.  Algorithm 1 leverages a next-token distinguisher to iteratively increase likelihood. In particular, given any target $\epsilon>0$, Theorem \ref{thm:main} shows that Algorithm 1 will terminate and output an efficiently computable model $q$ which is nearly (to within $\eps$) indistinguishable from the truth, and it runs in time polynomial in $1/\epsilon$. 

If $p\in Q$ and one has an optimal distinguisher, one will eventually converge to a model close to $p$, as has been discussed heavily in the literature. However, our results are also meaningful in the more realistic case where one has imperfect distinguishers.

\smallskip
\textbf{Contributions}. The main contributions of this paper are: 
\begin{itemize}
    \item showing that, although in general minimizing distinguishability and maximizing likelihood seem to be different, they are in fact closely related,
    \item introducing a new model of next-token distinguishability that is necessary to make the reduction efficient, and 
    \item offering a new perspective on why GANs are less popular in NLP and other sequential domains as they are for images.
\end{itemize}

\smallskip
\textbf{Organization}. We begin by summarizing related work on GANs, especially for text. We then illustrate how GANs can be overkill for the simple case of n-gram models in Section \ref{sec:overkill}. Section \ref{sec:loglinear} covers log-linear models. Section \ref{sec:main} gives explicit bounds on general reductions between maximizing likelihood and minimizing distinguishability. Section \ref{sec:sequential_distinguishers} shows how the reduction can be efficiently computed in the case of sequential inputs, from which we propose a simple polynomial time algorithm that provably finds a distribution which is nearly-indistinguishable with respect to a given class of discriminator functions. Finally, we discuss the relevance of our work in Section \ref{sec:discussion}.

\section{Related Work}\label{sec:related}

\smallskip
\textbf{Generative models}. 
Several approaches to generative modeling have been investigated, especially in the context of images. In particular, impressive results have been obtained recently with variational autoencoders, GANs, normalizing flows, autoregressive models, diffusion processes, and score/energy based models \cite{VAEs, G2014, NormalizingFlows, PixelCNN2016, Diffusion2020, song2021scorebased, score2021, NeuralODEs}. Generally, training approaches are either adversarial; or rely on MLE, contrastive divergence estimation, or score matching \cite{song2021maximum}. Some connections have begun to emerge between these models, and alternate training procedures have been advocated  \cite{CZSLPCB2020, song2021scorebased, yair2021contrastive}.  \\

\smallskip 
\textbf{GANs for text}.
Since their introduction \citep{G2014}, there has been interest in adapting GANs to text data. The driving motivation was that\;---up until very recently---\;samples generated by traditional (likelihood-based) models had been easy to distinguish from human-generated text, and the success of image GANs at generating realistic-looking samples suggested a possible avenue to improve the quality of their natural language counterparts.

The first and most significant challenge in adapting GANs to text arises from the very nature of this data. Goodfellow \cite{goodfellow2017nips} points out that GANs require the generator to be differentiable, which poses a challenge for discrete text representations such as one-hot word
or character representations. Two of the most popular approaches to circumvent this obstacle are policy gradient techniques (e.g., REINFORCE \citep{williams1992simple}) ---which when applied to language modeling nevertheless often require maximum likelihood pre-training \citep{che2017maximum, yu2017seqgan})--- and the Gumbel-Softmax approximation \citep{kusner2016gans}. The few adversarial methods that do not require pre-training (e.g., \citep{press2017language, rajeswar2017adversarial}) have failed to show significant promise in all but a few artificial tasks.

This nascent but active line of work seemed to suggest for a period of time that GANs might provide a breakthrough in text generation. This promise did not fully materialize, and instead the most recent breakthrough came from models building very large transformer-based architectures like GPT \citep{radford2018improving, radford2019language, brown2020language} or PaLM \citep{PaLM2022} --- which are trained with traditional cross-entropy (MLE) objectives.

Yet the question of how GAN-based methods for text compare with likelihood-based ones still garners significant interest, and while various works have provided an empirical comparison between them ---with most of these suggesting the advantage of MLE-based ones \citep{caccia2018language}--- theoretical explanations have been less explored. 

\smallskip 
\textbf{Relating objectives via divergences}.
The connection between maximum likelihood estimation, distinguishability and divergences between probability distributions has been explored before. For example, it is well known that maximizing likelihood is equivalent to minimizing the KL divergence between certain families of fitted and reference distributions, though this is not the only divergence for which such a connection exists \citep{rigollet2018entropic}. On the other hand, from the moment GANs were introduced, \citet{G2014} noted that\;---assuming a perfect discriminator---\;the adversarial objective corresponds to minimizing a Jensen-Shannon divergence. Furthermore, the minimal discrimination error is also directly related to the total variation distance (see, e.g., \citet{hashimoto2019unifying}). On the other hand, for exponential families the gradient of the KL divergence is known to be related to the discrepancy between distributions \citep{TH2015}. While conceptually similar to this line of work, here instead we give an \textit{explicit} reduction that shows how distinguishability and (log) likelihood are in direct correspondence.

Pinsker's inequality is a well-known result linking KL divergence and total variation distance (TVD):  $\text{TVD} \leq \sqrt{\text{KL}/2}$. While related, this inequality is not directly relevant to the context of this work. First, while total variation provides an upper bound to distinguishability, it is not computable in general, so it is rarely used as a training objective for generative models. On the other hand, being one-sided,\footnote{Reverse Pinsker's inequalities exist only for particular cases, but they too are very loose in general \citep{sason2015reverse}.} it does not imply that reducing TVD reduces KL divergence. Furthermore, Pinsker's is in general a very loose inequality, particularly for the direction of KLD that is equivalent to MLE (i.e., $\kld{p}{q_{\theta}}$), since if $p(x) > 0 \approx q_{\theta}(x)$ even for a single $x$ leads to unbounded KL divergence. In contrast, in this work we provide a \textit{direct reduction} directly linking the two criteria of interest: distinguishability and maximum likelihood.


\smallskip 
\textbf{Log-linear language models}. In this work we focus our analysis on log-linear models \citep{LMP2001, MFP2000}, which are widely used in natural language processing (often known in that community as Maximum Entropy --MaxEnt-- models) for various tasks. In particular, these models have been a cornerstone of both neural \citep{BDVJ2003, mikolov2013efficient} and non-neural \citep{rosenfeld1994adaptive, khudanpur2000maximum} language modeling. 

\smallskip 
\textbf{Boosting}. The reduction shown here bears resemblance to boosting. It is well-known that boosting can be analyzed through the lens of maximum likelihood  (\citet{FHT2000}), while \citet{LL2002} formalized the equivalence of AdaBoost and maximum likelihood training for exponential models. More recently, boosting has been applied to generative adversarial models \citep{tolstikhin2017adagan, grover2018boosted}, albeit with a different approach and objective than the connection drawn in this work.




\section{Illustration: GANs for n-gram language models}\label{sec:overkill}
To illustrate our main point, consider first the simplest model of language: a unigram model where the probability of each word is independent, and the end of sentence token $\EOS$ has a given probability as well. If $\theta_w$ represents the log-probability of word $w$, then the log-probability of sentence $w_1\ldots w_t$ is given by:
$$\log q(w_1 w_2 \ldots w_t)= \theta_{w_1} + \theta_{w_2} + \ldots + \theta_{w_t} + \theta_{\EOS}.$$ 
The MLE parameters $\theta^*$ can be computed in linear time by simply counting word frequencies.  

A more roundabout approach to fitting a unigram language model would be to start with any initial unigram model $q$, generate random samples from $q$ and compare them to those from $p$. One could then distinguish the two by finding a word that appears significantly more often in one than in the other. For example, if one generates text from the model $q$ and finds that the word ``the'' occurs much more often in text generated from $p$, one would then update the parameters by increasing $\theta_{\mathrm{the}}$ (and decreasing $\theta_{w'}$ for all other words $w'$ to keep $q$ a probability distribution). As we shall see later, if this more involved procedure converged, it would necessarily converge to the same maximum-likelihood estimator $\theta^*$.

A similar argument applies to any $n$-gram model in which the probability of each subsequent word is determined only by the previous $n-1$ words. This is also optimized by frequency counts (a variety of ``smoothing'' techniques, e.g., adding 1 to counts, also known as Laplace Smoothing \citep{good1953population, kneser1995improved} are often used as a form of regularization on top of these counts). Distinguishers could similarly be used to find a model $q$ that is indistinguishable from $p$ according to $n$-gram frequencies, but again this would simply converge to the MLE parameters.

\section{Equivalence for log-linear models}\label{sec:loglinear} 
In this section, we show that there is one optimal log-linear model that both minimizes distinguishability and maximizes log-likelihood. Consider a log-linear model with features $f:X \rightarrow [0,1]^d$, i.e., $d$ bounded features $f_i:X\rightarrow [0,1]$. The model predicts
\begin{equation}
q_\theta(x)=\frac{\exp\big(\langle f(x), \theta \rangle\bigr)}{Z_{\thetavec}}~,
\end{equation}
where $\langle \cdot, \cdot\rangle$ denotes inner product, $\thetavec\in \mathbb{R}^d$ is a parameter vector and $Z_{\thetavec}=\sum_x \exp\langle \thetavec, f(x)\rangle$ is a normalizing constant called the partition function. 

In the unigram example, the features $f_i$ would be word counts normalized by dividing by the maximum allowed sentence length (to ensure $f_i(x)\leq 1$). In a neural network the features $f_i$ would correspond to the top layer and $q$ computes a softmax. Multiple strategies have been studied for computing or estimating the partition function $Z_{\thetavec}$ \citep[see, e.g.,][]{desjardins2011tracking}. 

As discussed earlier, these feature functions can also be thought of as classifiers that distinguish  examples drawn from $p$ from those drawn from $q_{\theta}$ and the advantage of $f_i$ is $\alpha(f_i)=\sum_x f_i(x)(q_{\theta}(x)-p(x))$. The advantage vector is $\alpha(f)=\langle \alpha(f_i) \rangle_{i=1}^d$. Note that a negative advantage can be used for distinguishing by using the reverse classifier $1-f_i$ as a distinguisher, which has opposite advantage $\alpha(1-f_i)=-\alpha(f_i)$. 

\begin{observation}\label{obs:loglinear}
The gradient of $\kld{p}{q_\thetavec}$ with respect to $\thetavec$ is the advantage vector $\alpha(f)$, i.e., for all $i=1,2,\ldots, d$:
$$\frac{\partial \kld{p}{q_\thetavec}}{\partial \theta_i} = \sum_x f_i(x)(q_\thetavec(x)-p(x)) = \alpha(f_i).$$
\end{observation}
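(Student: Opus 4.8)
The plan is to differentiate $\kld{p}{q_\thetavec}$ directly, exploiting the fact that for a log-linear model the integrand splits into a term linear in $\thetavec$ and a log-partition term. First I would write
\[
\kld{p}{q_\thetavec} = \sum_x p(x)\log p(x) - \sum_x p(x)\log q_\thetavec(x),
\]
and note that the entropy term $\sum_x p(x)\log p(x)$ does not depend on $\thetavec$, so it drops out of the gradient. Substituting $\log q_\thetavec(x) = \langle f(x),\thetavec\rangle - \log Z_\thetavec$ and using $\sum_x p(x)=1$, the $\thetavec$-dependent part becomes $-\sum_x p(x)\langle f(x),\thetavec\rangle + \log Z_\thetavec$, which is the only thing I need to differentiate.

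Next I would differentiate each piece with respect to $\theta_i$. The linear term differentiates immediately to $-\sum_x p(x) f_i(x) = -\E_p[f_i(x)]$. The crux is the log-partition term: since $\partial_{\theta_i}\exp\langle \thetavec, f(x)\rangle = f_i(x)\exp\langle \thetavec, f(x)\rangle$, one gets
\[
\frac{\partial \log Z_\thetavec}{\partial \theta_i} = \frac{1}{Z_\thetavec}\sum_x f_i(x)\exp\langle \thetavec, f(x)\rangle = \sum_x f_i(x)\,q_\thetavec(x) = \E_{q_\thetavec}[f_i(x)].
\]
This is the familiar exponential-family identity that the gradient of the log-partition function equals the model's expected feature vector. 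Adding the two contributions yields $\E_{q_\thetavec}[f_i(x)] - \E_p[f_i(x)] = \sum_x f_i(x)(q_\thetavec(x) - p(x)) = \alpha(f_i)$, which is exactly the claimed advantage.

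The main obstacle is justifying the interchange of differentiation and summation in the log-partition step when $X$ is countably infinite, so that $Z_\thetavec$ is an infinite sum. I would handle this by dominated convergence: on an open neighborhood of $\thetavec$ where $Z_\thetavec$ is finite, the bounded features $f_i(x)\in[0,1]$ give $\exp\langle \thetavec', f(x)\rangle \le C\exp\langle \thetavec, f(x)\rangle$ for nearby $\thetavec'$ and a constant $C$, and $|\partial_{\theta_i}\exp\langle \thetavec', f(x)\rangle| = f_i(x)\exp\langle \thetavec', f(x)\rangle$ is then dominated by the summable envelope $C\exp\langle \thetavec, f(x)\rangle$. This licenses term-by-term differentiation; aside from this point the computation is the standard exponential-family gradient calculation.
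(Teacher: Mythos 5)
Your proof is correct and is precisely the ``straightforward calculation'' the paper alludes to: the paper states Observation~\ref{obs:loglinear} without proof, calling it well-known, and your computation (entropy term drops out, linear term gives $-\E_p[f_i]$, log-partition term gives $\E_{q_\thetavec}[f_i]$ via the standard exponential-family identity) is exactly that canonical argument. Your dominated-convergence justification for differentiating $\log Z_\thetavec$ term-by-term over a countably infinite $X$ is a sound extra touch of rigor that the paper leaves implicit.
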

The above straightforward calculation is well-known as is the fact that $\kld{p}{q_\thetavec}$ is convex in $\thetavec$. However, we interpret this fact in the context of GANs: searching for $\thetavec$ which gives a zero-gradient for KL divergence is equivalent to finding $\thetavec$ which is indistinguishable with respect to each $f_i$. While a number of GANs have be designed in various architectures that solve the seemingly more complex problem of $\min_\thetavec d(q_\thetavec)$, it can generally be more efficient to maximize likelihood, which (approximately) minimizes the KL divergence. 

\section{Distinguishability is equivalent to increasing likelihood for general $F$, $Q$}\label{sec:main}
In this section, we show how reducing log-loss is equivalent to distinguishing real and generated examples. This is the basis behind a single step of our main algorithm (the reduction in this section is efficient for a single step, but the increase in runtime would lead to a general exponential-time algorithm). The bounds here are in terms of log-loss, as measured on a specific sample, rather than the abstract KL divergence quantity of the previous section, which cannot be computed exactly using a finite sample. In particular, we show how, if one can distinguish a given distribution from the sample, then one can decrease that distribution's log-loss, and vice versa.

For the remainder of this section, we drop $\theta$ from the variable denoting the fitted distribution $q_{\theta}$ to avoid cluttering the notation. Fix a sample $\mathcal{S} = \langle x_1, \ldots, x_n \rangle \in X^n$ 
of $n$ training examples drawn from $p$, and define the log-loss to be: 
\[ \hat{L}(q; \mathcal{S}) = -\frac{1}{n}\sum_{i=1}^n \log q(x_i)= -\hE_{\mathcal{S}}[\log q(x)],\]
where we use hat on $L$ to denote that the loss is estimated on a (finite) training set $\cS$. Likewise,  $\hE_{\mathcal{S}} [g(x)]$ denotes the empirical expectation $\frac{1}{n}\sum_{i=1}^n g(x_i)$. Note that the expected log-loss over training sets is known as the cross-entropy 
\[ H(p,q) = \E_{\mathcal{S} \sim p^n}[\hat{L}(q; \mathcal{S})],\]
and hence the expected difference in log-loss between two candidate distributions $q$ and $q'$ is equal to the difference \[  \E_{\mathcal{S}\sim p^n}[\hat{L}(q; \mathcal{S}) - \hat{L}(q'; \mathcal{S})] = \kld{p}{q} - \kld{p}{q'}~,\]
so minimizing log-loss approximately minimizes the KL divergence. Also, we define the \textit{training advantage} 
of distinguisher $f: X \rightarrow [0,1]$ to be:
\begin{equation}\label{eq:advantage}
  \hat\alpha(f) = \E_q[f(x)]-\hE_{\mathcal{S}}[f(x)]  
\end{equation}
which is independent of $p$, depending on the sample alone and can thus be estimated to arbitrary accuracy using samples generated from $q$. The lemmas below show how one can use a distinguisher to reduce log-loss on the same training sample, and how to use a distribution with a lower log-loss to distinguish the two distributions. 

\begin{lemma}\label{lem:decreaseLogloss}
Let $a\geq 0$ and suppose $f:X \rightarrow [0,1]$ has training advantage $\hat\alpha(f) \geq a.$ Then, the probability distribution $q'(x) = q(x)e^{-a f(x)}/Z_{q'}$ where $Z_{q'}=\sum_x q(x)e^{-a f(x)}$, has lower log-loss:
$$\hat{L}(q'; \mathcal{S}) \leq \hat{L}(q; \mathcal{S}) - a^2/2.$$
\end{lemma}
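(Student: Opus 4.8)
The plan is to directly expand the difference in log-loss and then reduce the lemma to an upper bound on the partition function. First I would write
\[
\hat{L}(q;\mathcal{S}) - \hat{L}(q';\mathcal{S}) = \frac{1}{n}\sum_{i=1}^n \log\frac{q'(x_i)}{q(x_i)} = -a\,\hE_{\mathcal{S}}[f(x)] - \log Z_{q'},
\]
since by definition of $q'$ we have $\log(q'(x_i)/q(x_i)) = -a f(x_i) - \log Z_{q'}$. This isolates the only term that depends on $q$ rather than on the sample, namely $\log Z_{q'} = \log \E_q[e^{-a f(x)}]$, a log-moment-generating function that I now need to bound from above.

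The crux is to bound $\log Z_{q'}$ so that the $\E_q[f]$ it contains combines with $\hE_{\mathcal{S}}[f]$ to reconstruct the advantage $\hat\alpha(f) = \E_q[f(x)] - \hE_{\mathcal{S}}[f(x)]$. Here I would use that $a\geq 0$ and $f(x)\in[0,1]$, so $a f(x)\geq 0$ and the elementary inequality $e^{-y}\leq 1 - y + y^2/2$ (valid for $y\geq 0$) applies pointwise with $y = a f(x)$. Taking expectations over $q$ and using $f(x)^2 \leq f(x)$ (again because $f\in[0,1]$) gives
\[
Z_{q'} = \E_q[e^{-a f(x)}] \leq 1 - a\,\E_q[f(x)] + \tfrac{a^2}{2}\E_q[f(x)^2] \leq 1 - \bigl(a - \tfrac{a^2}{2}\bigr)\E_q[f(x)],
\]
and then $\log(1+u)\leq u$ yields $\log Z_{q'} \leq -a\,\E_q[f(x)] + \tfrac{a^2}{2}\E_q[f(x)]$.

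Substituting back, the $-a\,\hE_{\mathcal{S}}[f]$ and $+a\,\E_q[f]$ terms assemble into $a\,\hat\alpha(f)$, leaving
\[
\hat{L}(q;\mathcal{S}) - \hat{L}(q';\mathcal{S}) \geq a\,\hat\alpha(f) - \tfrac{a^2}{2}\E_q[f(x)].
\]
Finally I would invoke the two hypotheses: $\hat\alpha(f)\geq a$ bounds the first term below by $a^2$, while $\E_q[f(x)]\leq 1$ bounds the subtracted term above by $a^2/2$, giving the claimed decrease of at least $a^2/2$.

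The main thing to get right is the inequality used on $Z_{q'}$: it must be second-order accurate, since a first-order bound $e^{-y}\leq 1-y$ would wash out the quadratic gain entirely, and pairing it with $f^2\leq f$ is precisely what lets the residual quadratic term be absorbed by $\E_q[f]\leq 1$ so that the constant lands at $1/2$. A cleaner alternative is Hoeffding's lemma applied to $f(x)\in[0,1]$ under $q$, which gives $\log Z_{q'}\leq -a\,\E_q[f] + a^2/8$ and hence the even stronger decrease $a^2 - a^2/8 = 7a^2/8 \geq a^2/2$; either route suffices, so the only real care needed is tracking the sign conventions and the $[0,1]$ range of $f$.
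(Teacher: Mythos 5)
Your proof is correct and follows essentially the same route as the paper's: the identical decomposition of the log-loss difference, the same second-order exponential bound (your $e^{-y}\leq 1-y+y^2/2$ is the paper's $e^{-r}+r-1\leq r^2/2$, and your $\log(1+u)\leq u$ is its $\log r\leq r-1$), and the same use of $f(x)\in[0,1]$ to land at $a^2/2$. Your closing observation that Hoeffding's lemma would sharpen the decrease to $7a^2/8$ is also valid, though the paper does not pursue it.
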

Before we give the proof, we note that if $f$ is computed by a neural network and $q$ is computed as a neural network with a softmax at the top layer, i.e., $q(x) = e^{\langle v, g(x)\rangle}/\sum_x e^{\langle v, g(x)\rangle}$ where $g:X \rightarrow \reals^d$ is some neural network, then $q'$ is naturally represented as the combined neural network with softmax $q'(x) \propto e^{\langle (v, -a), (g(x), f(x))\rangle}$ in $d+1$ dimensions.
\begin{proof}[Proof (Lemma \ref{lem:decreaseLogloss})]
\begin{align}
    \hat{L}(q; \mathcal{S}) - \hat{L}(q'; \mathcal{S}) & = \hE_{ \mathcal{S}}[\log q'(x) -\log q(x)]\nonumber\\
    &= \hE_{ \mathcal{S}}\left[-a f(x) - \log Z_{q'} \right]\nonumber\\
    &= a (\E_q\left[ f(x) \right]-\hE_{ \mathcal{S}}\left[ f(x) \right])
    - a \E_q\left[ f(x) \right] - \log Z_{q'}\nonumber\\
    &=  a \hat\alpha(f)   -a \E_q\left[ f(x) \right] - \log Z_{q'}  \label{eq:foo}
\end{align}
Since $\hat\alpha(f)\geq a$ by assumption, it remains to show $a \E_q\left[ f(x) \right] + \log Z_{q'}\leq a^2/2$. Using the bound $\log r\leq r-1$ for any $r>0$, 
we get that,
\begin{align*}
    a \E_q\left[ f(x) \right] + \log Z_{q'}
     &\leq  a \E_q\left[ f(x) \right] + Z_{q'}-1\\ 
     &= a \E_q\left[ f(x) \right] + \E_q[e^{-a f(x)}] - 1\\
     &= \E_q\left[ a f(x) + e^{-a f(x)} - 1\right]\\
     &\leq \E_q[(a f(x))^2/2],
\end{align*}
where we have used the fact that $Z_{q'}=\E_q[e^{-a f(x)}]$ and, to get to the last line we use $e^{-r}+r-1\leq r^2/2$ for $r \geq 0$ by Taylor expansion. Since $f(x)\in [0,1]$, the last quantity is at most $a^2/2$, which together with (\ref{eq:foo}), gives $\hat{L}(q; \mathcal{S})-\hat{L}(q'; \mathcal{S}) \geq a^2/2$~.
\end{proof}
This means that if we can distinguish $\mathcal{S}$ from $q$, then we can simply reduce log-loss by down-weighting suspicious samples that satisfy the distinguisher $f(x)$. The difference between this statement and Observation \ref{obs:loglinear} is analogous to the difference between boosting and logistic regression \citep{FHT2000}. In logistic regression, one typically fixes the set of features in advance, whereas in boosting this is not possible if there are infinitely many possible classifiers. 


Conversely, we next show that if $q'$ has a lower log-loss than $q$ on the training samples, then we can distinguish $q$ from these samples.
\begin{lemma}\label{lem2}
For any constant $C > 1$ and distributions $q, q'$ such that $\frac{1}{C} q(x) \leq q'(x) \leq C q(x)$ for all $x \in X$, the distinguisher $f: X \rightarrow [0,1]$ defined by, 
\[f(x) = \frac{1}{2\log C}\log \frac{C q(x)}{q'(x)},\] has a training advantage of,
\[\hat\alpha(f) \geq \frac{\hat{L}(q) - \hat{L}(q')}{2\log C}.\]
\end{lemma}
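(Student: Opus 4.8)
The plan is to prove this by a direct computation that unwinds the definition of $f$ and recognizes the two expectations in $\hat\alpha(f) = \E_q[f(x)] - \hE_{\mathcal{S}}[f(x)]$ as a KL divergence and a difference of log-losses, respectively. Before anything else I would verify that $f$ is legitimately a distinguisher, i.e., that $f(x) \in [0,1]$ for every $x$, which is where both halves of the sandwich hypothesis $\frac{1}{C}q(x) \le q'(x) \le C q(x)$ are used. The upper bound $q'(x) \le C q(x)$ gives $\frac{Cq(x)}{q'(x)} \ge 1$, hence $\log\frac{Cq(x)}{q'(x)} \ge 0$ and $f(x)\ge 0$; the lower bound $\frac{1}{C}q(x) \le q'(x)$ gives $\frac{Cq(x)}{q'(x)} \le C^2$, hence $\log\frac{Cq(x)}{q'(x)} \le 2\log C$ and $f(x)\le 1$ after dividing by $2\log C$. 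So the normalizing constant $2\log C$ in the definition is precisely calibrated to land $f$ in the unit interval.

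Next I would write $f(x) = \frac{1}{2\log C}\bigl(\log C + \log q(x) - \log q'(x)\bigr)$ and take both expectations term by term. Under $q$, the constant contributes $\frac{\log C}{2\log C}=\frac12$, while the remaining piece $\E_q[\log q(x) - \log q'(x)]$ is exactly $\kld{q}{q'}$, so $\E_q[f(x)] = \frac{1}{2\log C}\bigl(\log C + \kld{q}{q'}\bigr)$. Under the empirical measure, the same expansion together with $\hE_{\mathcal{S}}[\log q(x)] = -\hat{L}(q)$ and $\hE_{\mathcal{S}}[\log q'(x)] = -\hat{L}(q')$ yields $\hE_{\mathcal{S}}[f(x)] = \frac{1}{2\log C}\bigl(\log C - \hat{L}(q) + \hat{L}(q')\bigr)$. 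Subtracting, the $\log C$ terms cancel and I get the exact identity $\hat\alpha(f) = \frac{1}{2\log C}\bigl(\kld{q}{q'} + \hat{L}(q) - \hat{L}(q')\bigr)$; the claimed bound then follows immediately by dropping the nonnegative term via $\kld{q}{q'}\ge 0$.

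Since the form of $f$ is handed to us, this argument is really a verification and there is no serious obstacle. The one step where care is genuinely needed is the $[0,1]$ range check, where it is easy to invoke only one side of the hypothesis and overlook that both inequalities are required. Conceptually, the single idea worth flagging is that $f$ is nothing but a clipped, rescaled log-likelihood-ratio test — the natural distinguisher between $q$ and $q'$ — and that the surplus it generates is exactly a nonnegative KL divergence, which is precisely what makes the inequality point in the direction we want.
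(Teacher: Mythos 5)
Your proof is correct and takes essentially the same route as the paper's: both rewrite $f$ as an affine function of the log-ratio $\log q(x) - \log q'(x)$ and reduce the claim to the nonnegativity of that ratio's expectation under $q$ --- you phrase this as $\kld{q}{q'}\ge 0$ and drop the term, while the paper inlines the underlying Jensen's-inequality argument, which is the very same fact. Your explicit check that $f(x)\in[0,1]$ (using both sides of the sandwich hypothesis) just fills in a step the paper declares ``straightforward to verify.''
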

\begin{proof}[Proof (Lemma \ref{lem2})]
Let $g(x) = \log q(x) - \log q'(x)$. By Jensen's inequality,
\begin{align*}
\E_q[g(x)] - \hE_{\mathcal{S}}[g(x)] &= -\E_q\left[\log \frac{q'(x)}{q(x)}\right] - \hE_{\mathcal{S}}[g(x)]\\ 
&\geq -\log \E_q\left[\frac{q'(x)}{q(x)}\right] - \hE_{\mathcal{S}}[g(x)]\\
&= -\log(1)- \hE_{\mathcal{S}}[g(x)]\\
&= \E_{\mathcal{S}}[\log q'(x)] - \hE_{\mathcal{S}}[\log q(x)]\\
&= \hat{L}(q; S) - \hat{L}(q'; S)
\end{align*}
Since $f(x) = \frac{1}{2\log C}(g(x) + \log C)$, the training advantage of $f$ is that of $g$ scaled by a factor of $\frac{1}{2\log C}$. Finally, it is straightforward to verify that $f(x) \in [0,1]$ by our assumptions on the ratio between $q$ and $q'$. 
\end{proof}

Importantly, due to the logarithmic dependence on $C$, the above lemma is meaningful even if $q$ and $q'$ are exponentially far apart so long as they have the same support.

Lemma \ref{lem:decreaseLogloss} implies a reduction between the problem of distinguishing with nontrivial advantage to non-trivially reducing log-loss for log-linear families.  Note that iteratively applying the reduction requires repeated computation of the normalization terms over $X$, and computing such partition functions is an area of active research---where it is known how to do it efficiently for some classes and not for others. The next section gives an efficient reduction for (unidirectional) sequential models.

\section{Efficient Reduction for Sequential Models}\label{sec:sequential_distinguishers}

This section gives an efficient reduction from distinguishing to MLE for sequential models. This requires showing how one can efficiently compute the normalization terms (partition function) on a token-by-token basis for black-box sequential (e.g., language or auto-regressive) models. The key insight for efficiency is that, rather than distinguishing entire sequences from $p$ and $q$, one distinguishes the conditional next-token predictions. In particular, rather than generating entire sequences from $q$, one can generate next-token predictions on all sequence prefixes in the training data. 

Clearly, evaluating a neural network over all sequences is infeasible. However, in many applications such as NLP, the inputs are sequential $x=(x_1,\dots,x_{\ell})$, where every token $x_i$ is taken from a large discrete vocabulary. In such cases, the combinatorial nature of the data makes density estimation intractable unless the likelihood computation is broken into small sequential steps by representing the overall probability as the $\Pr(x) = \prod_j \Pr(x_j | x_1, x_2, \ldots x_{j-1})$ . 

In this section we show how a natural extension of the framework described above allows us to achieve an efficient reduction for this common type of sequential model. To do so, we define a simple generalization of the training advantage criterion \eqref{eq:advantage}, which now relies on a \textit{step-wise distinguisher} $g$ operating on variable-length sequences. Formally, we consider a \textit{language} of $N$-length sequences\footnote{Padding can be used to handle sequences of variable length.} of tokens taken from a vocabulary $\mathcal{V}$, and distinguisher functions $g:\bigcup_{j=1}^N \mathcal{V}^j \rightarrow [0,1]$, i.e., functions which can take subsequences of any size as input. Given a sample $\cS$ of sequences, we say that $g$ has \textit{generalized training advantage} given by
\begin{equation}\label{eq:advantage_partial_distinguisher}
    \hat{\beta}(g) = \hat{\beta}(g, \cS, q) = \frac{1}{N}\sum_{j=1}^{N} \hE_{x\sim \mathcal{S}}  \bigl[ \E_{w \sim q(\cdot|x_0,\dots,x_{j-1})} g(x_0,\dots,x_{j-1},w) - g(x_0, \dots, x_{j}) \bigr]
\end{equation}
where, by convention, $x_0=\emptyset$, so that $q(x_0, w)=q(w)$. This criterion can be interpreted as follows. For every length $j \in \{1, 2, \ldots, N\}$, $g$ is tasked with distinguishing a subsequence consisting of the first $j$ tokens in a \textit{true} sequence sampled from $\cS$ from another $j$-length sequence in which the last element is replaced by a randomly selected token from the alternate distribution $q$.

\begin{lemma}\label{lem:decreaseLogloss_partial}
Let $b\geq 0$ and suppose $g:\bigcup_{j=1}^N \mathcal{V}^j \rightarrow [0,1]$ has generalized training advantage $\hat\beta(g) \geq b$. We define a distribution $q'$ through its conditional probabilities as:
\begin{equation*}
    q'(x_{j}\mid x_1, \dots, x_{j-1}) = q(x_{j} \mid x_1,\dots, x_{j-1}) e^{-bg(x_1,\dots, x_{j})}/Z_{q'}(x_1,\dots,x_{j-1})
\end{equation*}
where now $Z_{q'}(x_1, \dots, x_{j-1}) =\sum_{\tilde{x}_{j}} q(\tilde{x}_{j} \mid x_1, \dots, x_{j-1})e^{-b g(x_1, \dots, x_{j-1}, \tilde{x}_{j})}$. Then $q'$ incurs lower log-loss than $q$:
$$\hat{L}(q'; \mathcal{S}) \leq \hat{L}(q; \mathcal{S}) - Nb^2/2.$$
\end{lemma}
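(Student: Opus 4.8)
The plan is to mimic the proof of Lemma~\ref{lem:decreaseLogloss}, but to carry it out at the level of a single conditional step and then sum over the $N$ positions. The key observation is that the log-loss decomposes additively over tokens: since $\log q(x) = \sum_{j=1}^N \log q(x_j \mid x_1,\dots,x_{j-1})$, the per-sample difference $\hat L(q;\cS) - \hat L(q';\cS)$ splits into a sum of $N$ terms, one for each position $j$, each of which looks exactly like the single-step quantity analyzed before. This is why the bound carries an extra factor of $N$.

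Concretely, I would first write
\[
\hat L(q;\cS) - \hat L(q';\cS) = \hE_{x\sim\cS}\Big[\sum_{j=1}^N \big(\log q'(x_j\mid x_{<j}) - \log q(x_j\mid x_{<j})\big)\Big],
\]
and substitute the definition of $q'$ to replace each summand by $-bg(x_1,\dots,x_j) - \log Z_{q'}(x_{<j})$. Following the algebra of Lemma~\ref{lem:decreaseLogloss}, I would add and subtract $b\,\E_{w\sim q(\cdot\mid x_{<j})}g(x_{<j},w)$ inside each term so that the expression reorganizes into a ``generalized advantage contribution'' at position $j$ plus a correction term $-b\,\E_{w\sim q(\cdot\mid x_{<j})} g(x_{<j},w) - \log Z_{q'}(x_{<j})$. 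After taking $\hE_{x\sim\cS}$ and averaging (or rather summing) over $j$, the advantage contributions assemble precisely into $N\,\hat\beta(g) \geq Nb$ by the definition in~\eqref{eq:advantage_partial_distinguisher}; note that the definition of $\hat\beta$ already contains a $\tfrac1N$ normalization, so summing the unnormalized per-position terms gives $N\hat\beta(g)$.

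It then remains to bound each correction term. Here the argument is identical in structure to the single-step case: using $\log r \leq r-1$ and $Z_{q'}(x_{<j}) = \E_{w\sim q(\cdot\mid x_{<j})}[e^{-bg(x_{<j},w)}]$, one gets
\[
b\,\E_{w\sim q(\cdot\mid x_{<j})} g(x_{<j},w) + \log Z_{q'}(x_{<j}) \leq \E_{w\sim q(\cdot\mid x_{<j})}\big[(b g(x_{<j},w))^2/2\big] \leq b^2/2,
\]
since $g$ takes values in $[0,1]$. This holds uniformly for every prefix $x_{<j}$, so after summing over the $N$ positions and taking the empirical expectation over $\cS$ the total correction is at most $Nb^2/2$. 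Combining the $N\hat\beta(g) \geq Nb$ lower bound with the $Nb^2/2$ upper bound on corrections yields $\hat L(q;\cS) - \hat L(q';\cS) \geq Nb - Nb^2/2$, and since the target claim is a decrease of at least $Nb^2/2$, the condition $b \leq 1$ (which is forced because $\hat\beta(g)\in[-1,1]$ whenever $g\in[0,1]$) gives $Nb - Nb^2/2 \geq Nb^2/2$.

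The main obstacle I anticipate is purely bookkeeping rather than conceptual: getting the conditioning and the order of expectations exactly right, so that the per-position correction term is genuinely an expectation over $w \sim q(\cdot\mid x_{<j})$ of a function of that single token (which makes the $[0,1]$ bound on $g$ directly applicable), while the advantage term matches~\eqref{eq:advantage_partial_distinguisher} term-for-term including the convention $x_0=\emptyset$. One subtlety worth checking is that $Z_{q'}(x_{<j})$ is itself the normalizing sum over the \emph{next} token alone, so the ``add and subtract'' step must use $\E_{w\sim q(\cdot\mid x_{<j})}$ and not an expectation over full sequences; keeping the prefix $x_{<j}$ fixed throughout each per-position computation is what makes the single-step lemma's algebra transfer verbatim.
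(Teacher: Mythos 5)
Your proposal follows the paper's own proof essentially step for step: the same per-token decomposition of the log-loss, the same substitution of the definition of $q'$, the same add-and-subtract of $b\,\E_{w\sim q(\cdot\mid x_{<j})}g(x_{<j},w)$, and the same bound on the correction term via $\log r\le r-1$ and $e^{-r}+r-1\le r^2/2$. However, there is one algebraic slip in how the advantage terms assemble. Because the quantity you add and subtract carries the factor $b$, the residual ``advantage contribution'' at position $j$ is $b\bigl(\E_{w}g(x_{<j},w)-g(x_{1:j})\bigr)$, so summing over $j$ and averaging over $\cS$ yields $b\,N\hat\beta(g)\ \ge\ Nb^2$, not $N\hat\beta(g)\ \ge\ Nb$ as you claim. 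Your intermediate inequality $\hat L(q;\cS)-\hat L(q';\cS)\ \ge\ Nb-Nb^2/2$ therefore does not follow from the algebra you set up, and the subsequent appeal to $b\le 1$ is a patch for a step that was never established (even though $b\le 1$ is indeed forced by $\hat\beta(g)\le 1$).

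The repair is immediate and makes the argument cleaner, matching the paper exactly: with the factor of $b$ restored, the combination reads
\begin{equation*}
\hat L(q;\cS)-\hat L(q';\cS)\ \ge\ b\,N\hat\beta(g)-\frac{Nb^2}{2}\ \ge\ Nb^2-\frac{Nb^2}{2}\ =\ \frac{Nb^2}{2},
\end{equation*}
with no side condition on $b$ beyond $b\ge 0$. Everything else in your write-up --- the conditioning bookkeeping, the observation that $Z_{q'}(x_{<j})=\E_{w\sim q(\cdot\mid x_{<j})}[e^{-bg(x_{<j},w)}]$ is a one-token normalization so the per-prefix correction is bounded by $b^2/2$ uniformly, and the summation over the $N$ positions --- is correct and coincides with the paper's proof.
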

The proof is deferred to Appendix~\ref{sec:proof_lemma_seq}.

Next, we use Lemma~\ref{lem:decreaseLogloss_partial} repeatedly to derive a simple algorithm that, given access to non-trivial weak distinguishers, returns a distribution that is nearly indistinguishable (by that class) from the true distribution $p$. Formally, let $\mathcal{G} = \{ g \medspace | \medspace g:\bigcup_{j=1}^N \mathcal{V}^j \rightarrow [0,1]\}$ be a class of distinguishers.  We assume access to an oracle $\mathrm{O}_d: \mathcal{Q} \mapsto \mathcal{G}$ which for any $q \in \mathcal{Q}$ returns a distinguisher $g$. In practice, such as in typical GAN training setting, one could think of this oracle as being approximated by the subroutine that trains the discriminator. We say that $q$ is $\epsilon$-indistinguishable by oracle $\mathrm{O}_d$ if its output $g$ has advantage $\hat{\beta}(g, \cS, q)\leq \eps$. 
We do not need to assume that $\mathrm{O}_d$ is optimal in any sense.


\begin{algorithm2e}
\caption{Boosted weak distinguishers.}
\DontPrintSemicolon
\SetAlgoLined
\SetKwInOut{Input}{input}
\KwIn{Initial model $q_0$, corpus $\cS,$ distinguisher oracle $O_d$, advantage threshold $\epsilon$.}
 $t\gets 0$\;
 \While{True}{
 $g_t \gets \mathrm{O}_d(q_t)$\;
 $b_t \gets \hat{\beta}(g_t, \cS, q_t)$\;
 if $b_t < \epsilon$, \KwOut{$q_t$}
 Compute $q_{t+1}(x) \triangleq q_t(x) e^{-b_tg_t(x)}/\sum_{x \in \cS}q_t(x) e^{-b_tg_t(x)}$ on entire corpus $\mathcal{S}$\;
 $t \gets t+1$\; 
 }
\end{algorithm2e}

\begin{theorem}\label{thm:main}
Let $q_0$ be a language model and let $\eps>0$. Algorithm 1 returns a distribution $q^*$ which is $\epsilon$-indistinguishable from $\cS$ by oracle $\mathrm{O}_d$. It runs in $O\bigl(\frac{1}{\epsilon^2}L_0(\frac{T_d}{N} + nT_g(m+n))\bigr)$ time, where $L_0 = \hat{L}(q_0; \mathcal{S})$ is the log-loss of $q_0$, $T_d$ is the runtime of oracle $\mathrm{O}_d$, $T_g$ is the complexity of evaluating any distinguisher $g$ on a single input, $n=|\mathcal{V}|$ is the vocabulary size, $N$ is the sequence length and $m=|\cS|$ is the number of training sequences. 
\end{theorem}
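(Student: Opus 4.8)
The plan is to split the claim into two independent parts — that Algorithm~1 halts with an $\epsilon$-indistinguishable output, and that it does so within the stated time — and to drive both from the per-step decrease guaranteed by Lemma~\ref{lem:decreaseLogloss_partial}. For termination and correctness, observe that whenever the loop does not exit we have $b_t = \hat{\beta}(g_t, \cS, q_t) \ge \epsilon$, so applying Lemma~\ref{lem:decreaseLogloss_partial} with $g = g_t$ and $b = b_t$ gives
\[
\hat{L}(q_{t+1}; \cS) \le \hat{L}(q_t; \cS) - \tfrac{N b_t^2}{2} \le \hat{L}(q_t; \cS) - \tfrac{N\epsilon^2}{2}.
\]
Because each update multiplies the conditionals by the strictly positive factor $e^{-b_t g_t(\cdot)}$, full support is preserved, so every $q_t(x_i) \in (0,1]$ and hence $\hat{L}(q_t; \cS) \ge 0$ at all times. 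A quantity that begins at $L_0$, drops by at least $N\epsilon^2/2$ per iteration, and never goes below $0$ can be updated at most $T \le 2L_0/(N\epsilon^2)$ times, so the loop must exit. It exits exactly when $b_t < \epsilon$, i.e. when the oracle's distinguisher $g_t = O_d(q_t)$ satisfies $\hat{\beta}(g_t, \cS, q_t) < \epsilon$, which is by definition the $\epsilon$-indistinguishability of the returned $q^* = q_t$. I would stress that this argument uses no optimality assumption on $O_d$: it only needs each call to return some distinguisher.

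For the running time I would bound the cost of a single iteration and multiply by $T$. The crucial observation making this efficient is that, although $q_{t+1}$ is defined over all sequences, every quantity the iteration actually evaluates is supported on the prefixes of the $m$ training sequences, so no enumeration of the exponentially many sequences is needed. One iteration performs: (i) a single oracle call, costing $T_d$; (ii) the evaluation of $\hat{\beta}$ via \eqref{eq:advantage_partial_distinguisher}, which for each of the $N$ positions and each of the $m$ training prefixes forms $\E_{w \sim q_t(\cdot \mid \text{prefix})} g_t(\text{prefix}, w)$ as a sum of $g_t$ over the $n$ candidate next tokens and subtracts the single real-token value; and (iii) the in-place reweighting of the maintained next-token conditionals, whose per-prefix partition function $Z_{q'}(\cdot)$ is again a sum of $n$ vocabulary terms. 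Charging $T_g$ to each evaluation of $g_t$ and updating the conditionals in place (so that forming $q_{t+1}$ from $q_t$ never re-expands the full history), steps (ii)--(iii) come to $O(N n T_g(m+n))$ per iteration, the $m$ from the training prefixes and the extra $n$ from the vocabulary-sized conditional and partition computations. Multiplying by $T \le 2L_0/(N\epsilon^2)$ yields $O\!\big(\tfrac{1}{\epsilon^2} L_0(\tfrac{T_d}{N} + n T_g(m+n))\big)$, as stated.

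The telescoping iteration count and the correctness conclusion are routine once Lemma~\ref{lem:decreaseLogloss_partial} and the nonnegativity floor are in hand. The part I expect to require the most care is the running-time bookkeeping: making precise that the token-by-token update of Lemma~\ref{lem:decreaseLogloss_partial} is realized by maintaining only the $O(mNn)$ next-token conditionals over training prefixes — so that both the partition functions and $\hat{\beta}$ are computed without touching whole sequences — and then counting the vocabulary-sized sums tightly enough to produce the $n(m+n)$ factor rather than a looser estimate.
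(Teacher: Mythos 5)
Your proposal is correct and follows essentially the same route as the paper's own proof: the per-iteration decrease of $N\epsilon^2/2$ from Lemma~\ref{lem:decreaseLogloss_partial}, the resulting bound $T \le 2L_0/(N\epsilon^2)$ on the number of iterations, and the same per-iteration cost accounting of $O(T_d + NnT_g(m+n))$. Your version is in fact slightly more careful than the paper's in making explicit the nonnegativity of the log-loss (needed for the telescoping bound on $T$) and in spelling out that all computations are restricted to training-set prefixes, both of which the paper leaves implicit.
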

\begin{proof}
    The fact that Algorithm 1 terminates with a distribution $q$ which is $\epsilon$-indistinguishable by $\mathrm{O}_d$ is immediate from the stopping criterion. 
    
    Now, for the runtime analysis, note that ---by construction--- the iterates $g_t$, $t\in\{0,\dots,T-1\}$ have training advantage $\hat\beta(g_t, \cS, q_t) \geq \epsilon$. Thus, by Lemma~\ref{lem:decreaseLogloss_partial}, the algorithm makes at least $\frac{N\epsilon^2}{2}$ improvement in each iteration. Therefore, the total number of iterations $T$ is at most $\frac{2L_0}{N\epsilon^2}$, where $L_0 := \hat{L}(q_0; \mathcal{S})$ is the log-loss of the initial model. Each iteration of Algorithm 1 requires calling $\mathrm{O}_d$ oracle once, evaluating $\hat\beta( \cdot)$ at an $\mathcal{O}(NmnT_g)$ complexity, and updating each of the $n$ next-token probabilities of $q$ for each sequence length $1,\dots,N$. Each of these updates involves evaluating $g$ plus an $\mathcal{O}(n)$ partition normalization. Putting these together, we conclude that each iteration has $O(T_d + NnT_g(m + n))$ complexity.  
    
    Combining the the two arguments above, we conclude that Algorithm 1 has a total runtime of $O\bigl(\frac{1}{\epsilon^2}L_0(\frac{T_d}{N} + nT_g(m+n))\bigr)$.
\end{proof}

Thus, Algorithm 1 combined with Theorem~\ref{thm:main} and Lemma~\ref{lem:decreaseLogloss_partial} yields a polynomial-time reduction from distinguishing distributions to maximum likelihood estimation for sequential models. 


\section{Discussion and conclusions}\label{sec:discussion}
In this work, we have argued that minimizing log-loss (i.e., KL-divergence) and minimizing statistical distinguishability are tightly related goals. Specifically, if the families of distinguishers and probability distributions are of similar power, then one can use a distinguisher to reduce log-loss. This means in applications where it is natural to fit models by minimizing log-loss,  it is indeed likely to be a more direct and efficient means of fitting a model. This is the case for n-gram language models (and other sequential tasks), for which perplexity (a measure of likelihood) 
is easy to compute,  naturally meaningful, and allows for efficient sampling. Thus, for a long time, minimizing log-loss has been the objective with which most state-of-the-art models are trained. For such models, Lemma \ref{lem:decreaseLogloss} implies that if one can distinguish the model from samples by a neural network then one can construct a larger neural network with lower log-loss. Hence, one may prefer to simply train a larger model in the first place. 


 %

\section*{Broader Impact}\vspace{-0.2cm}
The contribution of this work is conceptual and theoretical, and as such, any nuanced discussion of the potential harms or benefits of its impact are irremediably tied to the applications where it might be put to use. We first make a few general observations regarding its immediate impact, and then discuss in a more informal manner downstream ramifications these might have in applications. 

This work revolves around comparing to training paradigms: maximum likelihood and adversarial learning. We believe that the maxim of `choosing the right tool for the job' applies in this context too, and can have important downstream consequences. For example, the amount of resources consumed by training large generative models has been growing substantially over the past few years \citep{amodei2018ai}. This is particularly true for Natural Language Processing, where state-of-the-art models are increasingly large and trained on increasingly larger datasets, leading to striking computationally and environmental costs \citep{strubell2019energy}. The key takeaway offered by this work, namely that training certain generative models like language models through adversarial methods is less efficient than doing so via likelihood maximization approaches, could potentially lead to significant saving of these resources by steering practitioners away from adversarial approaches. On the other hand, it is not our intention for this work to lead to the opposite\;---but equally undesirable---\;effect of dissuading practitioners from choosing adversarial training approaches whenever those are a sensible choice. 
\printbibliography




\appendix
\section{Proof of Lemma 3}\label{sec:proof_lemma_seq}

We proceed analogously as in the proof of Lemma~\ref{lem:decreaseLogloss}. We first note that

\begin{align*}
     \hat{L}(q; \cS) = -\hE_{\cS} \left[ \log \prod_{i=1}^{N} q(x_{i} \mid x_1, \dots, x_{i-1}) \right] = -\sum_{i=1}^{N} \hE_{\cS} \log q(x_{i} \mid x_1, \dots, x_{i-1}),
\end{align*}
and
\begin{align*}
    \hat{L}(q'; \cS) &= -\hE_{\cS} \left[ \log \prod_{i=1}^{N} q'(x_{i} \mid x_1, \dots, x_{i-1}) \right] \\&= \sum_{i=1}^{N} -\hE_{\cS} \log q(x_{i} \mid x_1, \dots, x_{i-1}) + b g( x_1, \dots, x_{i}) + \log Z_{q'}(x_1, \dots, x_{i-1})
\end{align*}
Let us use the short-hand notation $x_{1:i} \triangleq (x_1,\dots, x_i)$. 
Subtracting the two equalities above we obtain
\begin{align*}
    \hat{L}(q; \cS) - \hat{L}(q'; \cS) &=  \sum_{i=1}^{N} \hE_{\cS}\left[ -b g( x_{1:i}) - \log Z_{q'}(x_{1:i-1}) \right],
\end{align*}
which, after adding and subtracting $ \hE_{\cS}\E_{w \sim q( w \mid x_{1:i-1})} g(x_{1:i-1},w)$ and rearranging terms, yields
\begin{align}
    \hat{L}(q; \cS) - \hat{L}(q'; \cS) &= b \left[ \sum_{i=1}^{N} \hE_{\cS}  \biggl( \E_{w \sim q( w \mid x_{1:i})}g(x_{1:i-1},w) - g(x_{1:i}) \biggr) \right] \\ & \quad - \sum_{i=1}^{N} \hE_{\cS} \left[ \log Z_{q'}(x_{1:i-1}) -b \E_{w \sim q( w \mid x_{1:i-1})}  g(x_{1:i-1},w) \right] \\
    &= b N\hat{\beta}(g) - \sum_{i=1}^{N} \hE_{\cS} \left[ b\E_w g(x_{1:i-1},w) +  \log Z_{q'}(x_{1:i-1}) \right] \label{eq:last_step_diff}
\end{align}
By assumption we have $Nb\hat\beta(g)\geq Nb^2$, so it it remains to show that the second term is upper bounded by $Nb^2/2$. Using, as before, the bound $\log r\leq r-1$ for every $r=Z_{q'}(x_{1:i-1}) \geq 0$, 
we get that, for every $i=1,\dots,N$:
\begin{align*}
 \hE_{\cS} \left[ b\E_w g(x_{1:i-1},w) +  \log Z_{q'}(x_{1:i-1}) \right] &\leq 
  \hE_{\cS} \left[ b\E_w g(x_{1:i-1},w) +  Z_{q'}(x_{1:i-1}) - 1\right] \\
  &=  \hE_{\cS} \left[ b\E_w g(x_{1:i-1},w) + \E_w e^{-bg(x_{1:i-1},w)} -1  \right] \\
  &= \hE_{\cS}\E_w  \left[ bg(x_{1:i-1},w) +  e^{-bg(x_{1:i-1},w)} -1  \right] \\
  &\leq \hE_{\cS}\E_w  \left(bg(x_{1:i-1},w)/2 \right)^2 \leq \frac{b^2}{2}
\end{align*}
where the last inequality follows again from the fact that $g(x) \in [0,1]$ for any $x$. Therefore, the sum over these $N$ terms is upper bounded by $N\frac{b^2}{2}$, which combined with \eqref{eq:last_step_diff}, yields the desired result.

\end{document}